\pgfplotsset{compat=1.18}
\newtheorem{definition}{Definition}
\newcommand{\red}[1]{\textcolor{black}{#1}}
\newcommand{\ii}{\mathbf{i}}
\newtheorem{theorem}{Theorem}
\newcommand{\R}{\mathbb{R}}
\newcommand{\C}{\mathbb{C}}
\newcommand{\F}{\mathcal{F}}
\newcommand{\Deltac}{L^{\widetilde{\mathcal{F}}_{N}}}
\newcommand{\Deltact}{L^{\widetilde{\mathcal{F}}(t)}_{N}}
\newcommand{\Dc}{\widetilde{D}}
\newcommand{\Fc}{\widetilde{\mathcal F}}
\newcommand{\deltac}{\widetilde\delta}
\newcommand{\Lc}{L^{\widetilde{\mathcal{F}}}}
\newcommand{\lname}{Directed Sheaf Neural Network}
\newcommand{\name}{DSNN}
\newcommand{\diag}{\mathrm{diag}}
\DeclareMathOperator{\sgn}{sgn}
\DeclareMathOperator{\sign}{sgn}
\newcommand{\rsqrt}[1]{#1^{-\frac{1}{2}}}
\title{Sheaves Reloaded: A Directional Awakening}
\author{%
  Stefano Fiorini\thanks{Equal contribution.} \\
  Pattern Analysis \& Computer Vision \\
  Istituto Italiano di Tecnologia, Genoa, Italy \\
  \texttt{s.fiorini1994@gmail.com} \\
  \And
  Hakan Aktas\footnotemark[1] \\
  University of Cambridge, Cambridge, UK \\
  \texttt{hea39@cam.ac.uk} \\
  \And
  Iulia Duta\\
  University of Cambridge, Cambridge, UK\\
  \texttt{id366@cam.ac.uk}
  \And
  Stefano Coniglio\\
    Department of Economics\\
    University of Bergamo, Bergamo, Italy \\
    \And
    Pietro Morerio\\
Pattern Analysis \& Computer Vision\\
Istituto Italiano di Tecnologia, Genoa, Italy
\And
Alessio Del Bue\\
Pattern Analysis \& Computer Vision\\
Istituto Italiano di Tecnologia, Genoa, Italy
\And
Pietro Liò\\
University of Cambridge, Cambridge, UK\\
}
\begin{document}

\maketitle

\begin{abstract}
Sheaf Neural Networks (SNNs) represent a powerful generalization of a Graph Neural Networks (GNNs) that have significantly improved our ability to model complex relational data.
While directionality has been shown to substantially boost the performance of graph-learning tasks and is key to many real-world applications, existing SNNs fall short in representing it.
%
To address this limitation, we introduce the Directed Cellular Sheaf, a special type of cellular sheaf designed to explicitly account for edge orientation.
Building on it, we define a new sheaf Laplacian, the Directed Sheaf Laplacian $\Lc$,  which captures both the graph’s topology and its directional information.
This operator serves as the backbone of the \lname{} (\name), the first SNN model to embed a directional
bias
into its architecture.
Extensive experiments on nine real-world benchmarks show that \name{} consistently outperforms baseline methods.
\end{abstract}

\section{Introduction}


The fast-paced progress in neural networks and deep learning has provided researchers and practitioners with ever more powerful tools for capturing the relationships underlying complex data.
Sheaf Neural Networks (SNNs) have recently emerged as a powerful extension of traditional Graph Neural Networks (GNNs)~\citep{hansen2019toward, bodnar2022neural} 
Such networks rely on the algebraic notion of a {\em cellular sheaf}, which equips a graph with a geometrical structure that assigns vector spaces to nodes/edges and {\em restriction maps} which relate vertex features to edge features, shaping the edge-specific communication. SNNs not only allow working in a higher-dimensional feature space, but also naturally mitigate over-smoothing and improve performance in heterophilic graphs in which neighboring nodes may have dissimilar features~\cite{bodnar2022neural}.

A key limitation of current SNNs is their inability to fully capture the underlying topology of a graph, as they ignore edge orientations and only model undirected connections. For this reason, in this paper, we explore how to incorporate edge directions within the SNN framework in a principled way.
Such a research endeavor is crucial not only to replicate the gains that SNNs have shown over traditional GNNs in the undirected setting, but also to harness the benefits that edge directionality has brought to classical GNNs~\cite{zhang2021magnet} in tasks where data is inherently structured as a directed graph.
Moreover, our work aligns with evidence that edge orientation plays a critical role in complex networks~\citep{bianconi2008local}, as directionality underpins key topological and dynamical phenomena that can profoundly influence a system's behavior~\citep{harush2017dynamic,asllani2018structure}.


To enhance the representational capacity of SNNs in scenarios where directional information is critical, we introduce the concept of {\em Directed Cellular Sheaves}. Unlike traditional cellular sheaves used within state-of-the-art SNNs, which assign vector spaces (or more general algebraic structures) to the cells of a complex without an intrinsic notion of orientation, our framework incorporates directionality directly into the sheaf structure.
Building on this framework, we first define the {\em Directed Coboundary Operator} $\deltac$ associated with the Directed Cellular Sheaf. Subsequently, we rely on $\deltac$ to define the {\em Directed Sheaf Laplacian} (DSL) operator $\Lc$.
This operator not only captures the topological structure of the underlying graph but also faithfully integrates the sign and directionality of its edges.


Our main contributions are summarized as follows:

\begin{itemize}
    \item We introduce the {\em Directed Cellular Sheaf}, a mathematical construct that enriches directed graphs by enabling a principled representation of directional interactions between nodes. This structure assigns linear maps between the data spaces associated with the edges and vertices of a graph, which ensures that information flow between nodes depends on the orientation of the edges.
    \item We propose the {\em Directed Sheaf Neural Network} (DSNN)---an SNN architecture explicitly designed to include an inductive bias that reflects the directional structure of the graph.
    \item We conduct extensive experiments on nine real-world datasets and one synthetic dataset, demonstrating the advantages of our proposed way to incorporate directionality in an SNNs via the Directed Cellular Sheaf and its Laplacian operator $\Lc$.
\end{itemize}

\section{Background \& Related Work}

\subsection{Cellular Sheaves}

In the classical setting, a {\em sheaf} assigns data (such as sets, groups, or vector spaces) to open sets of a topological space (such as points, open segments, and open disks), together with restriction maps that propagate this data to open subsets within them.
A {\em cellular sheaf}~\citep{shepard1985cellular, curry2014sheaves} modifies this perspective by replacing open sets with cells of a cell complex (where 0-cells are points, 1-cells edges, 2-cells faces, etc.).
It assigns a vector space to each cell and a linear restriction map from each higher-dimensional cell to each of its faces, reflecting the hierarchical structure of the complex.
In line with recent works on SNNs~\citep{hansen2019toward, bodnar2022neural}, we focus on cell complexes consisting only of 0-cells and 1-cells, which coincide with the nodes and edges of a graph, and on lower-to-higher dimensional mappings from nodes to edges. In such models, the sheaf structure enables a principled generalization of message-passing architectures by allowing node features to propagate through edge-level transformations governed by linear restriction maps.
 
Following~\cite{hansen2019toward}, we define the \emph{cellular sheaf}
of an undirected graph $G = (V, E)$ with $n=|V|$ and $m=|E|$ as the triple $(\{\F(u)\}_{u \in V}, \{\F(e)\}_{e \in E}, \{\F_{u \trianglelefteq e}\}_{e \in \Gamma(u)})$, containing a vector space $\F(u)$ associated with each vertex $u \in V$, a vector space $\F(e)$ associated with each edge $e \in E$, and a linear map $\F_{u \trianglelefteq e} : \F(u) \rightarrow \F(e)$ for each edge $e \in \Gamma(u)$, where $\Gamma(u)$ is the subset of edges incident on $u$. In line with the SNN literature, all vector spaces are assumed to be real.
In the cellular sheaf, the vector spaces are referred to as \emph{stalks}, while the linear maps are called \emph{restriction maps}.
In this framework, the vertex stalks $\F(u)$ represent the node feature vectors (traditionally denoted as $x_v$ in the graph-learning literature). The space formed by all the spaces associated with the nodes (resp., edges) of the graph is called the space of 0-cochains $C^0(G; \F) = \oplus_{u \in V} \F(u)$ (resp.,
the space of 1-cochains $C^1(G; \F) = \oplus_{e \in E} \F(e)$).
%
The inter-vertex constraints are captured by the \emph{coboundary operator} $\delta : C^0(G; \F) \rightarrow C^1(G; \F)$, which, given an arbitrary orientation on the edges (where, for each $e = \{u,v\} \in E$, either $\F_{u \trianglelefteq e}$ or $\F_{v \trianglelefteq e}$ is multiplied by $-1$), is defined as $\delta (x)_e = \F_{u \trianglelefteq e} \, x_u - \F_{v \trianglelefteq e} \, x_v$.
%
%
From the coboundary operator, one can define the \emph{sheaf Laplacian} as $L^{\F} = \delta^T \delta$ which, for a given $x \in C^0(G; \F)$, reads:
\[
  L^\F(x)_u = \sum_{e = \{u,v\}} \F_{u \unlhd e}^T \left(\F_{u \unlhd e} x_u - \F_{v \unlhd e} x_v \right) \qquad \forall u \in V.
\]
Both $L^\F$ and its normalized version $L^\F_N$ are positive semidefinite operators on the space of 0-cochains $C^0(G; \F)$, and
%
are independent of the chosen edge orientation, mirroring a similar property that haolds for the standard graph Laplacian $L$~\citep{chung1997spectral}.
%
%
%


Several approaches have explored the use of sheaves in the context of graph-based learning. The first SNN was introduced by~\cite{hansen2019toward}, and later extended by~\cite{bodnar2022neural}, who proposed the Neural Sheaf Diffusion (NSD) model.
%
More recent SNN models build upon the NSD framework, incorporating attention mechanisms~\citep{barbero2022sheaf}, extending the architecture to hypergraph data~\citep{duta2023sheaf}, and introducing nonlinearities~\citep{zaghen2024sheaf}.

The SNN literature assumes that all node and edge stalks are finite-dimensional vector spaces of dimension $d$, all of which are isomorphic to $\R^d$.
In this way, every restriction map coincides with a $d\times d$ matrix. As a result, the sheaf Laplacian is a block-matrix of size $nd \times nd$ with blocks of size $d\times d$ which operates on an $nd$-dimensional vector-valued signal obtained by stacking the $d$-dimensional node signals $x_u \in \F(u)$ for all $u \in V$ associated with the graph's vertices (the 0-cochain).
When considering multi-feature vertex signals with $f > 1$ features (or channels), a SNN operates on a matrix-valued graph signal of size $nd \times f$. For any $u,v \in V$, the block of indices $u,v$ of $L^F$ is equal to the $d\times d$ matrix $-\F^T_{u \unlhd e} \F_{v \unlhd e}$.
The \emph{sheaf Laplacian} generalizes the classical graph Laplacian on an undirected and unweighted graph $G$. This is because, in the special case of a \emph{trivial sheaf}---a sheaf where each stalk is isomorphic to $\mathbb{R}$ and each restriction map is the identity map over $\mathbb{R}$---we recover the standard $n \times n$ graph Laplacian $L = D - A$, where $A \in \{0,1\}^{n\times n}$ is the adjacency matrix, and $D := \diag(\mathbf{1}_n^\top A)$ where $\mathbf{1}_n$ is the all-one vector.

To the best of our knowledge, no SNNs, including those introduced in the above-mentioned papers, have been proposed to incorporate the edge directions directly. We set out to do so in this paper.

\subsection{Discrete Laplacian matrices for undirected and directed graphs}

In the literature, GNNs are typically classified into two categories: spectral-based and spatial-based~\citep{wu2020comprehensive}. Spatial-based GNNs define the convolution as a localized-aggregation/message-passing operator~\citep{wang2019dynamic}.
For example, GatedGCN~\citep{li2016gated} handles directed graphs by aggregating information from out-neighbors (ignoring, though, potentially valuable signals from in-neighbors) and, more recently, Dir-GNN~\citep{rossi2024edge} employs separate aggregation schemes with distinct weights for in-neighbors and out-neighbors.
In contrast, spectral-based GNNs define the convolution operator rigorously as a function of the eigenvalue decomposition of the graph Laplacian~\citep{kipf2016semi}. Over the past few years, several approaches have been proposed to generalize spectral convolutions to directed graphs. In particular, DGCN~\citep{tong2020directed} introduces a first-order proximity matrix along with two second-order proximity matrices to describe both the neighborhood of each vertex and the vertices that are reachable from a given vertex in one hop. DiGCN~\citep{Tong2020} adopts the Personalized PageRank matrix
and incorporates $k$-hop diffusion matrices. Finally, several methods generalized the classical Laplacian matrix $L$ to suitably defined complex-valued, Hermitian matrices such as the Magnetic Laplacian~\citep{lieb1993fluxes} and the Sign-Magnetic Laplacian~\citep{fiorini2023sigmanet}.

%
The {\em Magnetic Laplacian} $L^{(q)}$, originally introduced by~\cite{lieb1993fluxes} in the study of electromagnetic fields and later employed in spectral GNNs by~\cite{zhang2021magnet, zhang2021smgc}, is a complex-valued Hermitian matrix that captures directional information in graphs while admitting an eigenvalue decomposition with a real, nonnegative spectrum. 
Letting $A_s := \frac{1}{2} \left(A+A^\top\right)$ be the symmetrized version of $A$ and letting $D_s := \diag(\mathbf{1}_n^\top \, A_s)$, the {\em Magnetic Laplacian} and its normalized version are defined as follows:
\begin{equation*}
 L^{(q)} := D_s - H^{(q)} \text{ and } L^{(q)}_N := I - \rsqrt{D_s} H^{(q)} \rsqrt{D_s}, \text{with } H^{(q)} := A_s \odot \exp \left(\ii \, 2 \pi q\left(A-A^\top \right) \right),
\end{equation*}
where $\ii$ is the imaginary unit and $q \in [0,1]$.

The {\em Sign-Magnetic Laplacian} $L^{\sigma}$, introduced by \cite{fiorini2023sigmanet}, is a Hermitian matrix that is well-defined even for graphs with negative edge weights and possesses several additional desirable properties.
When $q = \frac{1}{4}$, $L^\sigma$ and $L^{(q)}$ coincide if the latter is first computed on the unweighted version of the graph and then element-wise multiplied by $A_s$. Thus, $L^\sigma$ is invariant to a positive weight scaling which could otherwise alter the sign pattern of $L^{(q)}$ and, thus, the edge direction.
Letting $\bar D_s := \diag( \mathbf{1}_n^\top|A_s| )$
and $\sign: \mathbb{R}^{n\times n} \rightarrow \{-1,0,1\}^{n\times n}$ be the component-wise {\em signum} function, $L^{\sigma}$ and its normalized version are defined as follows:
\small
\begin{equation*}
    L^{\sigma} := \bar D_s - H^{\sigma} \text{ and } L^{\sigma}_N := I - \rsqrt{\bar D_s} H^{\sigma} \rsqrt{\bar D_s}, \text{ with } H^{\sigma} := A_s \odot \Big(  e^\top  - \sgn (|A - A^\top|) + \ii \sgn \big(|A| - |A^\top| \big) \Big).
\end{equation*}
\normalsize

\section{Directed Cellular Sheaf, Directed Sheaf Laplacian and Its Properties}


In this paper, we introduce the notion of a {\em Directed Cellular Sheaf}, a special type of cellular sheaf where the node and edge stalks are vector spaces defined over the complex field and in which, assuming finite-dimensional vector spaces, the restriction maps are either real-valued or complex-valued matrices where the latter encode the graph's direction.

\subsection{Directed Cellular Sheaf}

We introduce the Directed Cellular Sheaf for the case of finite-dimensional stalks. This is done solely for the ease of notation, as that definition can be directly extended to the infinite-dimensional case.
%
\begin{definition}
The \emph{Directed Cellular Sheaf}
of a directed graph $G = (V, E)$ with adjacency matrix $A \in \{0,1\}^{n\times n}$ is the tuple $(T^{(q)}, \{\Fc(u)\}_{u \in V}, \{\Fc(e)\}_{e \in E}, \{\Fc_{u \trianglelefteq e}\}_{e \in \Gamma(u)})$
consisting of:
\begin{enumerate}
\item A directional and topological Hermitian matrix $T^{(q)} := \exp(\ii \, 2 \pi q\left(A-A^\top \right))$, with $q \in \R$.
\item A vector space $\Fc(u) \in \C^{d}$ associated with each vertex $u \in V$;
\item A vector space $\Fc(e) \in \C^{d}$ associated with each edge $e \in E$;
\item Two linear maps $\Fc_{u \trianglelefteq e}, \Fc_{v \trianglelefteq e}$ that map $\Fc(u), \Fc(v)$ to $\Fc(e)$ for each edge $e \in E$ with $u \sim_e v$ where 
   $\Fc_{u \trianglelefteq e} \in \mathbb{R}^{d \times d}$ and $\Fc_{v \trianglelefteq e} = \Fc_{v \trianglelefteq e}^0 T_{uv}^{(q)} \in \mathbb{C}^{d \times d}$, with $\Fc_{v \trianglelefteq e}^0 \in \mathbb{R}^{d \times d}$, 
   \end{enumerate}
\end{definition}
where $u \sim_e v$ indicates that $e$ is incident to both $u$ and $v$ regardless of whether it is directed or not. An illustration is provided in Figure~\ref{fig:enter-label}.

\begin{figure}[htbp]
    \centering
    \begin{subfigure}{0.48\textwidth}   \centering\includegraphics[width=\linewidth]{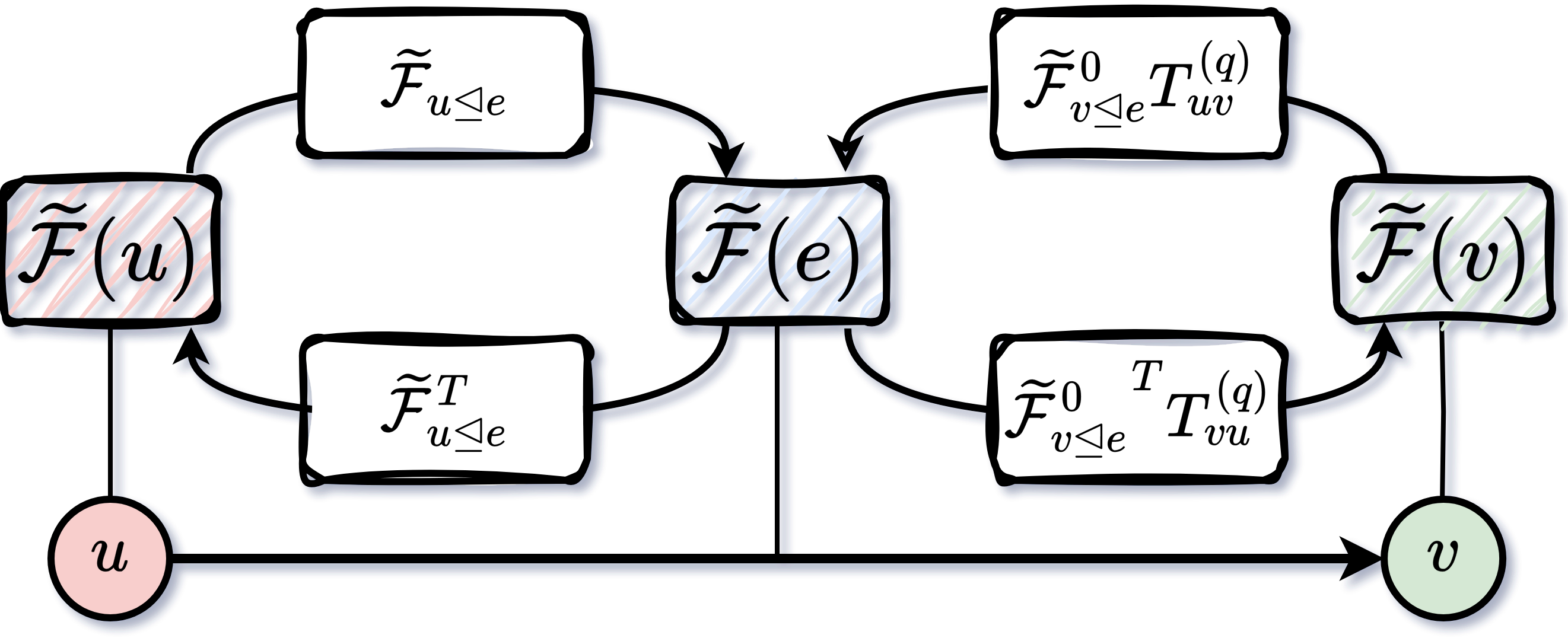}
        \caption{}
        \label{fig:image1}
    \end{subfigure}
    \hfill
    \begin{subfigure}{0.48\textwidth}
        \centering
        \includegraphics[width=\linewidth]{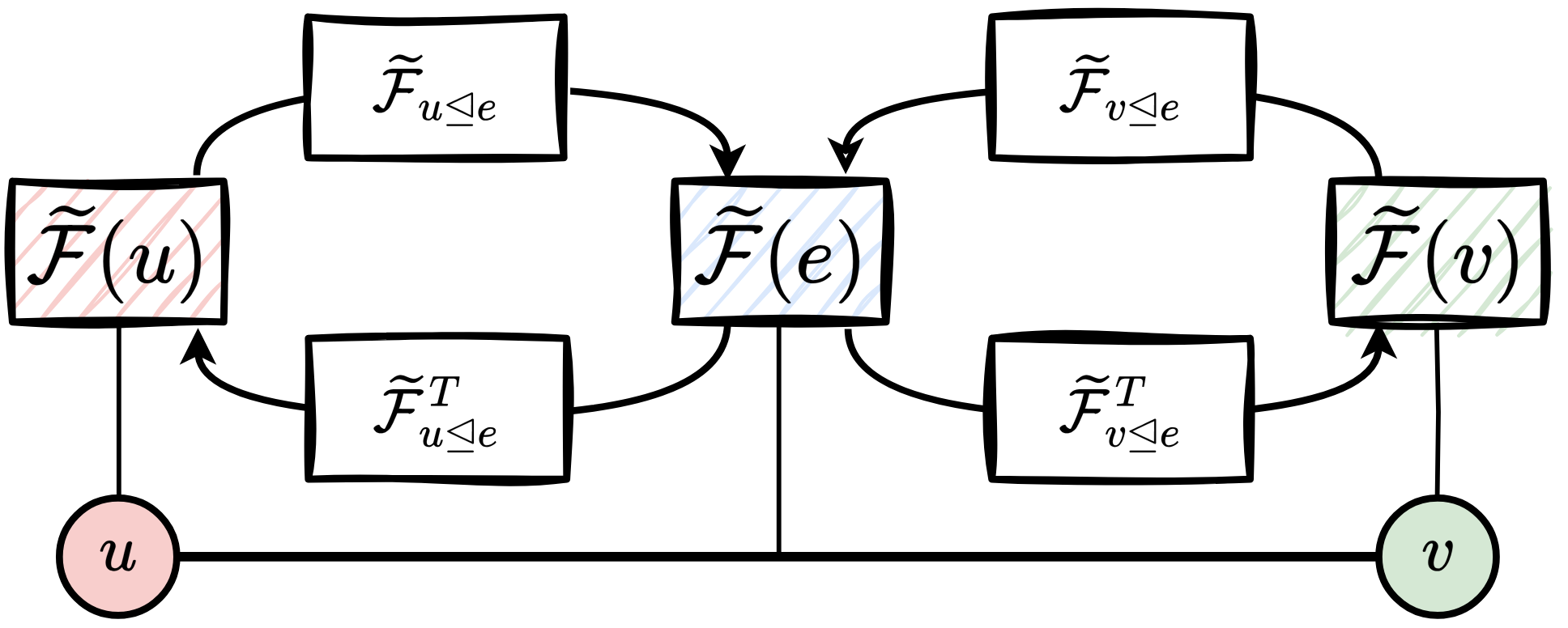}
        \caption{}
        \label{fig:image2}
    \end{subfigure}
    \caption{An illustration of the complex-valued restriction maps of the \textit{Directed Cellular Sheaf} showing how they encode the graph's directionality for (a) a directed edge and (b) an undirected edge.}
    \label{fig:enter-label}
\end{figure}

%
The rationale of such a definition is to encode the direction of each edge in the imaginary part of the restriction map of the tail node in the following sense. $i)$ In the undirected case where $e = \{u,v\}$, $A_{uv} = A_{vu} =1$, $T^{(q)}_{uv} = \cos(0) + \ii \sin(0) = 1$ and, thus, $\Fc_{v \trianglelefteq e} = \Fc_{v \trianglelefteq e}^0$.
$(ii)$ Assume $q=\frac{1}{4}$; in the directed case where $e = (u,v)$, $A_{uv}= 1$ and $A_{vu} = 0$; we have
%
$T^{(q)}_{uv} = \cos(-\pi\frac{1}{2}) + \ii \sin(-\pi\frac{1}{2}) = -\ii$, and thus $\Fc_{v \trianglelefteq e} = -\Fc_{v \trianglelefteq e}^0\ii$. 
Notice that our proposed Directed Cellular Sheaf generalizes the classical Cellular Sheaf since, if $G$ is an undirected graph, $T_{vu}^{(q)} = 1$ for all $\{u,v\} \in E$ for any choice of $q$ and, thus, the two coincide. If $G$ is directed, but we set $q=0$, we obtain the classical Cellular Sheaf associated with the undirected version of $G$.

Let $E^0 \cup E^1 = E$ be a partition of the edge set $E$ into undirected edges ($E^0$) and directed edges ($E^1$).
We define the {\em Directed Coboundary Operator} $\deltac$ associated with the Directed Cellular Sheaf as $\deltac(x)_e = \Fc_{u \unlhd e} \, x_u - \Fc_{v \unlhd e} \, x_v$ for all $e \in E$, where $x$ is a co-chain of the Directed Cellular Sheaf.
Thanks to our definition of $\Fc_{u \trianglelefteq e}, \Fc_{v \trianglelefteq e}$, we have:
\begin{equation}
  \deltac(x)_e = \left\{\begin{array}{ll}
                          \Fc_{u \unlhd e} \, x_u - \Fc_{v \unlhd e} \, x_v & \text{ if } e \in E^0\\
                          \Fc_{u \unlhd e} \, x_u - \Fc_{v \unlhd e}^0 T_{\red{uv}}^{(q)} \, x_v & \text{ if } e \in E^1.
                        \end{array}.
    \right.
\end{equation}
We define the {\em Directed Sheaf Laplacian} (DSL) $\Lc$ associated with a Directed Cellular Sheaf
as $\Lc := \deltac^* \deltac$,
where $*$ is the conjugate transpose operator.
Each $d\times d$ block of $\Lc$ of index $u,v \in V$ is:
\begin{align}\label{eq:Lc:uv}
\Lc_{uv} =& \left\{\begin{array}{ll}
                    -\Fc_{u \unlhd e}^* \Fc_{v \unlhd e} = -\Fc_{u \unlhd e}^T \Fc_{v \unlhd e}^0 T_{uv}^{(q)} & \text{ if } e=(u,v)\\
                    -\Fc_{u \unlhd e}^* \Fc_{v \unlhd e} = -(\Fc_{u \unlhd e}^0 T_{vu}^{(q)})^* \Fc_{v \unlhd e}  & \text{ if } e=(v,u)\\
                    -\Fc_{u \unlhd e}^* \Fc_{v \unlhd e} = -\Fc_{u \unlhd e}^T \Fc_{v \unlhd e} & \text{ if } e=\{u,v\}\\
                    0 & \text{ otherwise}
\end{array}
  \right.\\
  \label{eq:Lc:uu}
  \Lc_{uu} =& \sum_{e \in \Gamma(u)} \Fc_{u \trianglelefteq e}^* \Fc_{u \trianglelefteq e}
  ,
\end{align}
where
$\Gamma(u)$ is the set of edges incident to $u$ regardless of their direction. Notice that, since $(T_{vu}^{(q)})^* = -T_{uv}^{(q)}$, for a directed edge $e=(u,v)$ or $e=(v,e)$, $\Lc_{uv}$ and $\Lc_{vu}$ only differ by the sign of their imaginary part.



%
As one can see (the full derivation is reported in the appendix), when applied to a 0-cochain $x$, the Directed Sheaf Laplacian operator reads as follows for each $u \in V$:
{
\begin{align}\label{eq:Lc}
    \Lc(x)_u = &
    \underbrace{\sum_{e=(v,u) \in E} (\Fc_{u \trianglelefteq e}^0 T_{vu}^{(q)})^* (\Fc_{u \trianglelefteq e} x_u - \Fc_{v \trianglelefteq e} x_v)}_{\text{inflow}}\\
    \notag
    +& \underbrace{\sum_{e=(u,v) \in E} \Fc_{u \trianglelefteq e}^T (\Fc_{u \trianglelefteq e} x_u - \Fc_{v \trianglelefteq e}^0 T_{uv}^{(q)}x_v)}_{\text{outflow}}
    +\underbrace{\sum_{e=\{u,v\} \in E} \Fc_{u \unlhd e}^T (\Fc_{u \unlhd e} x_u -  \Fc_{v \trianglelefteq e} x_v)}_{\text{undirected}}. 
\end{align}
}

We define the {\em normalized Directed Sheaf Laplacian} as:
\begin{equation}
    \Lc_N := \Dc^{-\frac{1}{2}} \Lc \Dc^{-\frac{1}{2}},
\end{equation}
where $\Dc := \diag(\Dc_1, \Dc_2, \dots, \Dc_n)$ and, for all $u \in V$, $\Dc_u := \sum_{e \in \Gamma(u)} \Fc_{u \trianglelefteq e}^* \Fc_{u \trianglelefteq e}$.

\subsection{Spectral properties of the Directed Sheaf Laplacian}

The Directed Sheaf Laplacian enjoys several key spectral properties, which we now illustrate. The proofs of the theorems of this section and the next can be found in Appendix~\ref{appx:proof}.
First, we show that both $\Lc$ and $\Lc_N$ are diagonalizable with a real spectrum and that their spectra are nonnegative:
\begin{theorem}
      $\Lc$ is Hermitian and $\Lc \succeq 0$, and the same holds for $\Lc_N$.
\end{theorem}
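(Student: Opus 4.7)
The proof essentially reduces to the observation that $\Lc$ was defined as $\deltac^*\deltac$, i.e., it has the canonical Gram form. My plan is therefore to exploit this form directly rather than to work block-by-block with the explicit expressions in \eqref{eq:Lc:uv}--\eqref{eq:Lc:uu}.

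First, for $\Lc$ itself I would argue as follows. Hermiticity is immediate: $\Lc^* = (\deltac^* \deltac)^* = \deltac^* (\deltac^*)^* = \deltac^* \deltac = \Lc$, using only the involutive nature of the conjugate transpose. Positive semidefiniteness is equally direct: for any 0-cochain $x \in C^0(G;\Fc)$,
\begin{equation*}
  x^* \Lc\, x \;=\; x^* \deltac^* \deltac\, x \;=\; \langle \deltac x,\, \deltac x \rangle \;=\; \|\deltac x\|^2 \;\geq\; 0,
\end{equation*}
where $\langle \cdot,\cdot\rangle$ is the standard Hermitian inner product on $C^1(G;\Fc)$. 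This establishes $\Lc \succeq 0$ without ever having to inspect the off-diagonal blocks or the partition of $E$ into directed and undirected edges. As a sanity check, I would briefly verify that the explicit block formulas \eqref{eq:Lc:uv} are consistent with Hermiticity, noting that $(T_{vu}^{(q)})^* = T_{uv}^{(q)}$ so that $\Lc_{vu} = \Lc_{uv}^*$ across all three cases (directed in either orientation, and undirected).

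For the normalized version $\Lc_N = \Dc^{-1/2}\Lc\,\Dc^{-1/2}$, the plan is to first establish that $\Dc$ is Hermitian and positive semidefinite. Since $\Dc$ is block-diagonal with blocks $\Dc_u = \sum_{e \in \Gamma(u)} \Fc_{u\trianglelefteq e}^* \Fc_{u\trianglelefteq e}$, each block is a sum of Gram matrices and hence Hermitian PSD; consequently $\Dc$ is Hermitian PSD and, under the standard SNN assumption that every $\Dc_u$ is nonsingular (which is what makes $\Dc^{-1/2}$ well-defined in the first place), it is positive definite. Then $\Dc^{-1/2}$ itself is Hermitian, so
\begin{equation*}
  \Lc_N^* \;=\; \bigl(\Dc^{-1/2}\bigr)^* \Lc^* \bigl(\Dc^{-1/2}\bigr)^* \;=\; \Dc^{-1/2} \Lc\, \Dc^{-1/2} \;=\; \Lc_N,
\end{equation*}
and for any $x$,
\begin{equation*}
  x^*\Lc_N x \;=\; \bigl(\Dc^{-1/2}x\bigr)^* \Lc \bigl(\Dc^{-1/2}x\bigr) \;\geq\; 0
\end{equation*}
by the PSD property of $\Lc$ just proved. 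Diagonalizability with a real spectrum then follows from the spectral theorem applied to the Hermitian matrix $\Lc_N$ (and likewise $\Lc$).

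I do not anticipate a genuine obstacle here; the statement is essentially a structural consequence of the fact that $\Lc$ was introduced as $\deltac^*\deltac$. The only point that warrants care is the well-posedness of $\Dc^{-1/2}$: one should either assume, or state as a hypothesis, that the restriction maps out of each vertex are jointly of full column rank so that $\Dc_u \succ 0$. This is the standard nondegeneracy assumption in the SNN literature and does not alter the core argument.
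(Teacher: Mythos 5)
Your proof is correct and follows essentially the same route as the paper's: both exploit the Gram form $\Lc = \deltac^* \deltac$ for Hermiticity and the quadratic-form identity $x^*\Lc x = \|\deltac x\|_2^2 \geq 0$ for positive semidefiniteness, and both handle $\Lc_N$ by absorbing $\Dc^{-1/2}$ into the coboundary operator (the paper via $\deltac' := \deltac \Dc^{-1/2}$, you via explicit conjugation). Your only addition is making explicit the Hermiticity and nonsingularity of $\Dc$ needed for $\Dc^{-1/2}$ to be well-defined and Hermitian, a point the paper leaves implicit.
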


Next, we show that the spectrum of the Normalized Sheaf Laplacian is upper-bounded by 2:
\begin{theorem}
  $\Deltac \preceq 2 I$.
\end{theorem}
These theorems show that $\Lc$ and $\Lc_N$ enjoy the same spectral properties as the classical Laplacian matrix $L$ defined for undirected graphs. These are the essential to define a principled convolutional operator which coincides with approximating the graph-Fourier transform of a graph signal with Chebyshev polynomials of the first kind of degree 1, as proposed by~\cite{kipf2016semi} for the undirected case.



\subsection{Generalization properties of the Directed Sheaf Laplacian}

First, we show that the Directed Sheaf Laplacian generalizes both the Sheaf Laplacian and the classical graph Laplacian:
\begin{theorem}\label{thm:undirected}
   If $G$ is undirected, $\Lc$ coincides with the classical sheaf Laplacian $L^\F$ for any choice of $q \in \R$. Also, if the sheaf is trivial and $G$ is undirected and unweighted, $\Lc$ coincides with the classical graph Laplacian $L$. If $G$ is directed and we set $q = 0$, $\Lc$ coincides with the classical sheaf Laplacian associated with the undirected version of $G$.
\end{theorem}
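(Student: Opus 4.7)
The plan is to analyze, for each of the three claims, how the phase factor $T^{(q)}_{uv} = \exp(\ii\, 2\pi q (A - A^\top)_{uv})$ simplifies under the stated hypotheses, and then read off the consequence for the block entries of $\Lc$ given in \eqref{eq:Lc:uv}--\eqref{eq:Lc:uu}. Each claim should then follow by direct substitution; no spectral or analytic work is needed.

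For the first claim, if $G$ is undirected then $A = A^\top$, so $(A - A^\top)_{uv} = 0$ for every pair, hence $T^{(q)}_{uv} = 1$ regardless of the choice of $q \in \R$. The definition of the Directed Cellular Sheaf then gives $\Fc_{v \trianglelefteq e} = \Fc^0_{v \trianglelefteq e} \in \R^{d \times d}$ for every edge, so every restriction map is real and the conjugate transpose $(\cdot)^*$ collapses to the ordinary transpose. Substituting into \eqref{eq:Lc:uv}--\eqref{eq:Lc:uu} recovers exactly the block formula for the classical sheaf Laplacian $L^\F$ recalled in Section 2.1. The second claim is then immediate: once $\Lc = L^\F$ on an undirected graph, the reduction of $L^\F$ to $L = D - A$ under the trivial-sheaf assumption (already stated in Section 2.1) closes the argument.

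For the third claim, fix $q = 0$, so that $T^{(0)}_{uv} = \exp(0) = 1$ for every pair of vertices, whether the corresponding edge is directed or not. Every complex-valued restriction map then collapses to its real part, $\Fc_{v \trianglelefteq e} = \Fc^0_{v \trianglelefteq e}$, and plugging this into \eqref{eq:Lc:uv} shows that all three non-trivial branches ($e=(u,v)$, $e=(v,u)$, $e=\{u,v\}$) reduce to the single expression $-\Fc^T_{u \unlhd e} \Fc_{v \unlhd e}$, which is precisely the classical sheaf Laplacian block entry for the undirected edge between $u$ and $v$. The diagonal blocks \eqref{eq:Lc:uu} are already of this form. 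The main (and essentially only) obstacle is bookkeeping: one must specify how each directed edge $(u,v)$, together with a possible reverse counterpart $(v,u)$, is identified with a single undirected edge in the ``undirected version of $G$,'' so that the sums in \eqref{eq:Lc} collapse correctly. This is a routine enumeration check rather than a conceptual one.
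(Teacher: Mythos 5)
Your proof is correct and follows essentially the same route as the paper's: in each of the three cases you show the phase factor $T^{(q)}_{uv}$ collapses to $1$ under the stated hypothesis, so every restriction map becomes real, the conjugate transpose becomes an ordinary transpose, and the block entries of $\Lc$ reduce by direct substitution to those of the classical sheaf Laplacian (and, for the trivial sheaf, to $L = D - A$). The bookkeeping concern you raise in the third part is resolved by the paper's definition of the undirected version of $G$---each directed edge is preserved and merely stripped of its orientation, so edges are in bijection and no collapsing of the sums is required.
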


Let a {\em Trivial Directed Cellular Sheaf} be any Directed Cellular Sheaf with $d=1$ where, for all directed edges $e=(u,v)$, $\Fc_{u \trianglelefteq e} = 1$ and $\Fc_{v \trianglelefteq e} = T_{uv}^{(q)}$.
With the next theorem, we show that, for a given directed graph without weights, $\Lc$ generalized the Magnetic Laplacian and, when choosing $q = \frac{1}{4}$, also the Sign-Magnetic Laplacian.
The following holds:
\begin{theorem}
  Letting $G$ be a directed graph with unit weights, the Directed Sheaf Laplacian $\Lc$ associated with a Trivial Directed Cellular Sheaf coincides with the Magnetic Laplacian $L^{(q)}$. In the special case where $q=\frac{1}{4}$, $\Lc$ also coincides with the Sign-Magnetic Laplacian $L^\sigma$.
\end{theorem}

It is well-known that the classical Laplacian matrix $L$ defined for an undirected graph can be equivalently defined as $L=D-A$ or $L = BB^T$, where $B\in\{-1,0,1\}^{n \times m}$ is the node-to-edge incidence matrix of the graph in which either of the two entries of each column has been arbitrarily multiplied by $-1$.
While, to the best of our knowledge, no similar construction is known for the Magnetic Laplacian and the Sign-Magnetic Laplacian, with the following theorem, we show that one such decomposition exists and can be obtained via the lens of our Directed Sheaf Laplacian, thanks to its generality. Indeed, we have the following:
\begin{theorem}
    Let $G$ be a directed graph with unit weights. \red{Assuming a Trivial Directed Cellular Sheaf}, the conjugate transpose $\deltac^*$ of the Directed Coboundary Operator $\deltac$ boils down to the complex-valued node-to-edge incidence matrix $\widehat B \in \mathbb{C}^{\red{n \times m}}$ defined for an edge $e \in E$ incident to a vertex $u$:
  \begin{equation*}
    \widehat B_{\red{ue}} = \left\{\begin{array}{ll}
                    1 & \text{ if } e=(u,v) \text{ or } e=\{u,v\} \text{ with } \red{u < v}\\
                    \red{-1} & \text{ if } e=\{u,v\} \text{ with } \red{u > v}\\
                    -T_{uv}^{(q)} & \text{ if } e=(v,u).
                  \end{array}
  \right.
  \end{equation*}
  It follows that $L^{(q)} = \widehat B \widehat B^*$.
  With $q=\frac{1}{4}$, $L^{(\frac{1}{4})} = L^{\sigma} = \widehat B \widehat B^*$.
\end{theorem}
Incidentally, this result also allows to obtain substantially simpler proofs of the positive semidefiniteness of both Laplacian matrices than those reported in their original papers.

\section{The Directed Sheaf Neural Network (DSNN)}


%
%
The \emph{sheaf diffusion} process on a graph $G$ is introduced in~\cite{hansen2020sheaf} as a generalization of the classical heat diffusion process that governs classical spectral-based GNNs~\cite{kipf2016semi}. It follows the differential equation
\[
\dot{X}(t) = -L_N^{\mathcal{F}} X(t),
\]
where $X(t)$ is a time-dependent graph signal $X$.
More precisely, $X_u$ is the stalk of each node $u \in V$, and it coincides with a matrix in $\mathbb{R}^{d \times f}$, where $d$ denotes the dimensionality of the vertex stalk and $f$ is the number of feature channels.
$X$ is typically obtained starting from a matrix of node features of size $n \times f$ to which one applies a linear projection to obtain an $n \times (df)$ matrix, which is then reshaped to $(nd) \times f$.

%

%
%
By relying on our proposed Directed Sheaf Laplacian $\Lc$, we introduce the {\em Directed Neural Sheaf Diffusion} process as the following generalization of the Neural Sheaf Diffusion process proposed by~\cite{bodnar2022neural}:
\begin{equation}\label{eq:contiheat}
  \dot{X}(t) = -\sigma\left( \Lc_N(t) \Big( I_n \otimes W_1(t) \Big) X(t) W_2(t) \right),
\end{equation}
where $W_1 \in \R^{d \times d}$, $W_2\in\R^{f \times f}$ are two time-dependent weight matrices
and $\sigma$ is a nonlinear activation function.

We then define the \lname{} (\name) as the convolutional neural network whose convolution operator is obtained from the discretized version of
Equation~\ref{eq:contiheat},
which leads to:
\begin{equation}\label{eq:layerwise}
    X^{(t+1)} = X^{(t)}- \sigma\left(\Deltact\left( I_n  \otimes W_1^{(t)} \right) X^{(t)} W_2^{(t)}\right),
\end{equation}
where $X^{(t)}, X^{(t+1)} \in \C^{nd\times f}$.

The expressiveness of Equation~\eqref{eq:layerwise} is further enhanced by learning a parameter $\epsilon \in [-1, 1]^d$ that allows the model to adjust the relative magnitude of the features in each stalk dimension. This gives the update rule as:
\begin{equation}\label{eq:sheafneural}
     X^{(t+1)} = \diag(1 + \varepsilon)X^{(t)}- \sigma\left(\Deltact\left( I_n  \otimes W_1^{(t)} \right) X^{(t)} W_2^{(t)}\right),
\end{equation}
where $\varepsilon \in [-1, 1]^{nd}$ is obtained by concatenating $\epsilon \; n$ times.
As activation function $ \sigma $, we adopt a complex extension of the \textit{ReLU} function, defined for a given $ z \in \mathbb{C}$, as 
\begin{equation*}
   \sigma(z) = 
\begin{cases}
z & \text{if } \Re(z) \geq 0, \\
0 & \text{otherwise}.
\end{cases} 
\end{equation*}
This choice is consistent with previous work on complex-valued GNNs and HNNs, such as~\citep{zhang2021magnet,fiorini2024let}.

Finally, since our model operates in the complex domain, we project the output of the final layer to the real domain using an \textit{unwind} operation. Given $ X^{(\tau)} \in \mathbb{C}^{n \times c} $, the projection is defined as:
\begin{equation*}
\text{unwind}(X{(\tau)}) = \left( \Re(X{(\tau)}) \,\|\, \Im(X{(\tau)}) \right) \in \mathbb{R}^{n \times 2c},    
\end{equation*}
where $\tau$ is the last convolutional layer of the network, $ \| $ denotes concatenation along the feature dimension, and $c$ is the output dimension.




\paragraph{Learnable Sheaf Laplacian.} 
A key strength of SSNs is their ability to operate over richer structures—sheaves—rather than just the underlying graph. Since multiple sheaf structures can be associated with the same graph, effectively modeling the most suitable one is critical for meaningful representation learning. In our proposed models, the restriction maps are learned end-to-end as a function of the input vertex features.
Specifically, for each edge $e \in E$ with endpoints $u,v \in V$, each $d \times d$ matrix $\F_{u \unlhd e}$ is parameterized as $\F_{u \unlhd e} = \Phi(x_u \,\|\, x_v)$, where $x_v$ and $x_u$ denote the feature vectors of the nodes incident to $e$. The resulting vector is reshaped into a $d \times d$ matrix, thus obtaining the linear restriction map $\F_{u \unlhd e}$.

\paragraph{Connection with Neural Sheaf Diffusion.}  The Neural Sheaf Diffusion process proposed by~\cite{bodnar2022neural} relies on the \textit{Normalized Sheaf Laplacian} $L_N^{\mathcal{F}}$ instead of on our proposed \textit{Directed Sheaf Laplacian} $\Lc_N$ in Equation~\eqref{eq:layerwise}. Since, as shown in Theorem~\ref{thm:undirected}, $L_N^{\mathcal{F}} = \Lc_N$ when the graph in undirected, NSD is a special case of  SNN when the graph is undirected.


\paragraph{Computational Complexity.}
Let $f$ be the number of channels, assumed constant throughout the layers, and let's focus on a single convolutional layer.
In the case of an undirected graph, where all restriction maps are real-valued, the complexity of DSNN is identical to the complexity of NSD, and reads $O\big(n(c^2 + d^3) + m(cd^2 + d^3)\big)$, which, with $d=1$, coincides with the complexity of a classical spectral-based GNNs~\cite{kipf2016semi}, which is $\mathcal{O}(nc^2 + mc)$. In the experiments, we use $d \in \{2,5\}$, which only introduced a small, constant overhead with no asymptotic impact.
For a directed graph, the restriction maps are complex-valued, and thus, the stalks are complex-valued from layer 2 onward. This, though, only leads to an extra multiplicative cost of about 4, which is independent of the graph and size of the network and plays no role in the complexity of DSNN.

For the proof of the theorems in this section, and for additional details on DSNN's inference complexity, please refer to Appendix~\ref{appx:proof} and Appendix~\ref{appx:complexity}, respectively.


\section{Experiments}

We evaluate \name{} against state-of-the-art methods on a diverse set of benchmark datasets spanning from real-world (Section~\ref{sec:real}) to synthetic dataset (Section~\ref{sec:synthetic}).
Following~\citep{bodnar2022neural}, we experiment with three types of $d\times d$ blocks in the Directed Sheaf Laplacian $\Lc$, {\em diagonal}, {\em orthogonal}, and {\em general} matrices, which lead to three variants of \name: Diag-\name, O(d)-\name, Gen-\name.
The tables in this section report the best results in boldface, and the second-best results are underlined.
The datasets and code we used are available on GitHub (see Appendix~\ref{appx:implementation}).
Further details on our experiments are reported in Appendix~\ref{appx:dataset}, \ref{appx:experiment}.

\subsection{Real World Experiments}\label{sec:real}

We measure the performance of our proposed DSNN on the node classification task of predicting the class label of each node. The {\tt Texas}, {\tt Wisconsin}, {\tt Cornell}, and {\tt Film} datasets are taken from~\cite{pei2020geom}, while {\tt Citeseer}, {\tt PubMed}, and {\tt Cora} are sourced from~\citep{yan2022two}. The {\tt Squirrel} and {\tt Chameleon} datasets come from~\cite{platonovcritical}.
Since it is known that GNN-type methods often suffer from poor performance on heterophilic datasets (datasets where neighboring nodes have, on average, different labels), it is crucial to assess the performance of one's method also on graphs of this type. For this reason, the datasets we consider span a wide range of edge homophily coefficients, from as low as 0.11 (highly heterophilic) to as high as 0.81 (highly homophilic), allowing us to assess model performance across varying levels of structural homophily.
Following~\citep{bodnar2022neural}, we split the data 10 times and report the mean accuracy and standard deviation. Each split allocates 48\%/32\%/20\% of the nodes per class for training, validation, and testing, respectively.

As baselines, we use a large set of GNN and SNN models from five categories: \textit{i)} classical GNN models: GCN~\citep{kipf2016semi}, GAT~\citep{veličković2018graph}, GraphSAGE~\citep{hamilton2017inductive}; \textit{ii)} GNN models designed for heterophilic graphs: GGCN~\citep{yan2022two}, Geom-GCN~\cite{pei2020geom}, H2GCN~\citep{zhu2020beyond}, GPRGNN~\citep{chien2021adaptive}, FAGCN~\citep{bo2021beyond}, MixHop~\citep{abu2019mixhop}; \textit{iii)} GNN models that address the oversmoothing problem: GCNII~\citep{chen2020simple}; \textit{iv)} GNN models that incorporate edge directionality: MagNet~\citep{zhang2021magnet}, SigMaNet~\citep{fiorini2023sigmanet}, DirGNN~\citep{rossi2024edge}; \textit{v)} SNN models: NSD~\cite{bodnar2022neural}, and NSD-Compl (a variant of NSD employing complex restriction maps which we introduce in this paper for comparison purposes), with the same Diag-, O(d)-, Gen- variants as DSNN.

\begin{table*}[t]
\centering
\caption{Mean and standard deviation of the accuracy on a collection of real-world graph benchmarks on the node classification task.}

\tiny
\begin{tabular}{l c c c c c c c c c}
\toprule
Model & Texas & Wisconsin & Film & Squirrel & Chameleon & Cornell & Citeseer & Pubmed & Cora \\
\midrule
Homophily Level & 0.11 & 0.21 & 0.22 & 0.22 & 0.23 & 0.30 & 0.74 & 0.80 & 0.81 \\
\# Nodes & 183 & 251 & 7,600 & 2223 & 890 & 183 & 3,327 & 18,717 & 2,708 \\
\# Edges & 295 & 466 & 26,752 & 46,998 & 8,854 & 280 & 4,676 & 44,327 & 5,278 \\
\# Classes & 5 & 5 & 5 & 5 & 5 & 5 & 6 & 3 & 7 \\
\midrule
\textbf{Diag-\name} & \textbf{88.65±4.95} & \textbf{90.20±4.02} & \underline{38.34±1.01} &  45.37±2.21& 46.84±4.03 & \textbf{87.84±5.70} & \underline{79.80±1.49} & \textbf{90.23±0.44} & 87.36±1.41\\
\textbf{O(d)-\name} & \underline{87.57±4.04} & \underline{89.80±3.82}& 37.37±0.98& 44.54±2.26& 45.36±3.29& \underline{87.30±7.26} & 77.28±1.63& 90.05±0.55& 87.30±1.62\\
\textbf{
Gen-\name} & \underline{87.57±5.43} & 89.22±3.31& \textbf{38.40±0.75} & 45.34±1.69 & \textbf{47.16±3.54} & \textbf{87.84±6.86} & \textbf{79.88±1.21} & \underline{90.17±0.44} & 87.58±0.72\\
\midrule
Diag-NSD-Comp & 86.49±5.35 & 89.01±4.81 & 37.84±1.04 & \textbf{45.61±1.62} & 46.47±2.83 & 83.51±6.56 & 77.20±1.43 & 89.74±0.46 & 87.65±1.04 \\
O(d)-NSD-Comp & 87.29±5.54 & 89.21±4.57 & 37.06±0.96 & 41.51±2.01 & 45.56±4.18 & 84.86±0.53 & 76.95±1.67 & 88.65±0.49 & 87.23±1.85 \\
Gen-NSD-Comp & 87.56±3.86 & 89.04±2.93 & 37.72±1.17 & 45.13±2.19 & \underline{47.01±2.55} & 86.49±6.84 & 77.02±1.75 & 89.68±0.31 & 87.82±0.84 \\
Diag-NSD & 85.67±6.95 & 88.63±2.75 & 37.79±1.01 & 45.52±2.22 & 46.55±3.03 & 86.49±7.35 & 77.14±1.85 & 89.42±0.43 & 87.14±1.06 \\
O(d)-NSD & 85.95±5.51 & 89.41±4.74 & 37.81±1.15 & \underline{45.59±2.23} & 46.26±3.11  & 84.86±4.71 & 76.70±1.57 & 89.49±0.40 & 86.90±1.13 \\
Gen-NSD & 82.97±5.13 & 89.21±3.84 & 37.80±1.22 & 45.31±2.05 & 45.60±3.36 & 85.68±6.51 & 76.32±1.65 & 89.33±0.35 & 87.30±1.15 \\
\midrule
DirGnn  & 74.22±3.97 & 71.37±6.57  & 29.30±1.22  & 44.48±1.94 & 45.56±3.36 & 61.46±3.63   & 76.09±1.53 & 85.14±0.44 & 86.20 1.18  \\
SigMaNet & 78.92±4.49 & 80.21±5.07 & 36.59±0.55  & 40.89±1.97 & 40.98±3.88  & 73.53±5.91 & 74.35±0.96 & 88.35±0.64 & 85.51±1.14 \\
MagNet & 79.46±8.13 & 81.18±2.80 & 36.51±0.96 & 41.04±1.84 & 43.82±4.56  & 75.99±5.59 & 77.21±1.69  & 88.47±0.54 & 86.32±1.39 \\ \midrule
GGCN & 84.86±4.55 & 86.86±3.29 & 37.54±1.56 & 40.75±2.44  & 39.71±3.25 & 85.68±6.63 & 77.14±1.45 & 89.15±0.37 & 87.95±1.05 \\
H2GCN & 84.86±7.23 & 87.65±4.98 & 35.70±1.00 & 37.77±1.92  & 42.07±4.13 & 82.70±5.28 & 77.11±1.57 & 89.49±0.38 & 87.87±1.20 \\
GPRGNN & 78.38±4.36 & 82.94±4.21 & 34.63±1.22 & 36.62±2.28& 40.67±2.89 & 80.27±8.11 & 77.13±1.67 & 87.54±0.38 & 87.95±1.18 \\
FAGCN & 82.43±6.89 & 82.94±7.95 & 34.87±1.25 & 41.08±2.27 & 41.90±2.72 & 79.19±9.79 & 77.10±1.81 & 90.21±0.36 & \underline{88.17±1.24} \\
MixHop & 77.84±7.73 & 75.88±4.90 & 32.22±2.34 & 43.46±2.52 & 45.09±3.02 & 73.51±6.34 & 76.26±1.33 & 85.31±0.61 & 87.61±0.85 \\
GCNII & 77.57±3.83 & 80.39±3.40 & 37.44±1.30 & 42.22±2.13  & 43.76±2.49 & 77.86±3.79 & 77.33±1.48 & 90.15±0.43 & \textbf{88.37±1.25} \\
GraphSAGE & 82.43±6.14 & 81.18±5.56 & 34.23±0.99 & 39.22±1.20 & 41.67±2.52 & 75.95±5.01 & 76.04±1.30 & 88.45±0.50 & 86.90±1.04 \\
GCN & 55.14±5.16 & 51.76±3.06 & 27.32±1.10 & 39.47±1.47 & 40.89±4.12  & 60.54±5.30 & 76.50±1.36 & 88.42±0.50 & 86.98±1.27 \\
GAT & 52.16±6.63 & 49.41±4.09 & 27.44±0.89 & 35.62±2.06 & 39.21±3.08 & 61.89±5.05 & 76.55±1.23 & 87.30±1.10 & 86.33±0.48 \\
MLP & 80.81±4.75 & 85.29±3.31 & 36.53±0.70 & 40.45±1.41 & 42.79±3.80 & 81.89±6.40 & 74.02±1.90 & 87.16±0.37 & 75.69±2.00 \\
\bottomrule
\end{tabular}
\label{tab:gnn_benchmark}
\end{table*}

%
The results are reported in Table~\ref{tab:gnn_benchmark}. \name{} consistently outperforms the second-best baseline on 7 out of 9 datasets (also in highly heterophilic cases), obtaining an overall improvement of 1.29\% with the largest improvement of 3.46\% on {\tt Citeseer}.
When compared to SNN methods (NSD and NSD-Compl, which do not incorporate a notion of direction), the directional component of DSNN leads to a strictly stronger performance on 7 datasets out of 9.
Compared with GNN methods which incorporate a notion of direction (DirGNN, SigMaNet and MagNet), the directional component of DSNN leads to a better performance across all the 9 dataset, with an average improvement of approximately 6\% and a largest gain, obtained on {\tt Cornell}, of 15.59\%.
These results show that, by relying on both the expressivity of a cellular sheaf as well as on a notion of directionality, DSNN achieves a more expressive propagation of directional signals. Thanks to this, it manages to outperform both SNN and GNN methods.

\subsection{Synthetic Datasets}\label{sec:synthetic}

To further investigate the role of directionality in the context of Sheaf Neural Networks, we compare \name{} against NSD and NSD-Compl on a set of synthetic graphs generated using the Direct Stochastic Block Model (DSBM).
These datasets are generated by varying: \textit{i)} the number of nodes $n$; \textit{ii)} the number of clusters $C$; \textit{iii)} the probability $\alpha_{ij}$ to create an undirected edge between nodes $i$ and $j$ belonging to different clusters; \textit{iv)} the probability $\alpha_{ii}$ to create an undirected edge between two nodes in the same cluster, and \textit{v)} the probability $\beta_{ij}$ of an edge taking a certain direction.
%
For this experiment, the DBSM datasets are generated with $n= 2500$, $C = 5$, $\alpha_{ii} = 0.1$, $\beta_{ij} = 0.2$, with an increasing inter-cluster density $ \alpha_{ij} \in \{0.05, 0.08, 0.1\}$.
We employ unit-dimensional feature vectors defined as each node's sum of in- and out-degrees.
%
We run the experiments 10 times per dataset with a 80\%/5\%/15\% training/validation/testing split, and report the mean accuracy and standard deviation.


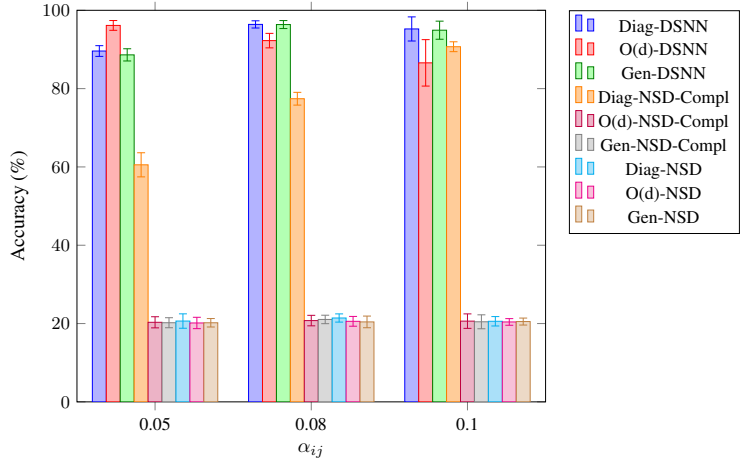
\begin{figure}[]
\centering
\begin{adjustbox}{max width=0.7\linewidth}
\begin{tikzpicture}
\begin{axis}[
    height=8.5cm,
    ybar,
    bar width=7pt,
    ylabel={Accuracy (\%)},
    xlabel={$\alpha_{ij}$},
    symbolic x coords={0.05, 0.08, 0.1},
    xtick=data,
    ymin=0, ymax=100,
    enlarge x limits=0.25,
    yticklabel style={font=\small},
    xticklabel style={font=\small},
    legend style={
      font=\footnotesize,
      at={(1.05,1)},
      anchor=north west,
    },
    error bars/y dir=both,
    error bars/y explicit,
    cycle list={
      {blue,        fill=blue!30,          bar shift=-28pt},
      {red,         fill=red!30,           bar shift=-21pt},
      {green!50!black, fill=green!30,      bar shift=-14pt},
      {orange,      fill=orange!40,        bar shift=-7pt},
      {purple,      fill=purple!30,        bar shift=0pt},
      {gray,        fill=gray!30,          bar shift=7pt},
      {cyan,        fill=cyan!30,          bar shift=14pt},
      {magenta,     fill=magenta!30,       bar shift=21pt},
      {brown,       fill=brown!30,         bar shift=28pt}
    },
]

\addplot+ coordinates {
    (0.05,89.6)  +- (0,1.37)
    (0.08,96.4)  +- (0,0.92)
    (0.1,95.24)  +- (0,3.08)
}; \addlegendentry{Diag-\name}

\addplot+ coordinates {
    (0.05,96.14) +- (0,1.25)
    (0.08,92.26) +- (0, 1.85)
    (0.1,86.58)  +- (0, 5.93)
}; \addlegendentry{O(d)-\name}

\addplot+ coordinates {
    (0.05,88.62) +- (0,1.56)
    (0.08,96.36) +- (0,1.02)
    (0.1,94.92)  +- (0,2.31)
}; \addlegendentry{Gen-\name}

\addplot+ coordinates {
    (0.05,60.54) +- (0,3.08)
    (0.08,77.42) +- (0,1.63)
    (0.1,90.70)  +- (0,1.26)
}; \addlegendentry{Diag-NSD-Compl}

\addplot+ coordinates {
    (0.05,20.32) +- (0,1.42)
    (0.08,20.76) +- (0,1.34)
    (0.1,20.62)  +- (0,1.85)
}; \addlegendentry{O(d)-NSD-Compl}

\addplot+ coordinates {
    (0.05,20.21) +- (0,1.28)
    (0.08,21.05) +- (0,1.08)
    (0.1,20.45)  +- (0,1.78)
}; \addlegendentry{Gen-NSD-Compl}

\addplot+ coordinates {
    (0.05,20.64) +- (0,1.84)
    (0.08,21.42) +- (0,1.05)
    (0.1,20.58)  +- (0,1.20)
}; \addlegendentry{Diag-NSD}

\addplot+ coordinates {
    (0.05,20.15) +- (0,1.45)
    (0.08,20.57) +- (0,1.25)
    (0.1,20.41)  +- (0,0.85)
}; \addlegendentry{O(d)-NSD}

\addplot+ coordinates {
    (0.05,20.20) +- (0, 1.08)
    (0.08,20.42) +- (0,1.49)
    (0.1,20.51)  +- (0,0.89)
}; \addlegendentry{Gen-NSD}

\end{axis}
\end{tikzpicture}
\end{adjustbox}
\caption{Mean and standard deviation of the accuracy on a collection of synthetic benchmarks on the node classification task as a function of the parameter $\alpha_{ij}$ used during graph generation.}
\label{fig:fig2}
\end{figure}

As shown in Figure~\ref{fig:fig2}, the three variants of \name---diagonal, O(d), and general---which rely on the {Directed Sheaf Laplacian} substantially outperform both NSD and NSD-Compl.
The three variants of DSNN consistently achieve an average performance between 86\% and 96\% across all three settings, one per value of $\alpha_{ij}$, whereas all the other baselines except for one (Diag-NSD-Compl) achieve an average accuracy of 20\% which, due to having $C=5$ classes, is as poor as the accuracy of trivial uniform predictor.
The better performance achieved by Diag-NSD-Compl over the other baselines (but not DSNN) can be attributed to its more limited number of trainable parameters compared to the other NSD-Compl variants, which facilitates convergence and enables the model to partially capture some directional interactions in its complex-valued (albeit classical) Sheaf Laplacian.
Nonetheless, its performance remains substantially lower than DSNN's by 29\% on average.
%
These results underscore \name’s ability to leverage edge directionality.
They further substantiate that the way we defined our complex-valued Laplacian matrix by embedding topological information via the imaginary part of the restriction maps leads to a substantially better performance that what one would obtain by simply learning complex-valued restriction maps end-to-end without relying on an inductive bias coming from the graph's topology (which is what is done in NSD-Compl).
%
%






\section{Conclusion}

{\bf Summary and findings.} We introduced the Directed Cellular Sheaf, from which we derived the Directed Sheaf Laplacian $\Lc$. By encoding the edge direction in its imaginary components, 
$\Lc$ carries a directional inductive bias thanks to which we obtain a convolution operator implementing a message-passing scheme capable of handling asymmetric interactions. We embedded such an operator in the Directed Sheaf Neural Network (DSNN).
Our theoretical results have shown that DSNN generalizes several well-established graph-learning models, including NSD, MagNet, and SigMaNet.
Empirically, DSNN exhibits strong performance across both real-world and synthetic datasets, consistently outperforming both traditional GNNs and SNNs. This demonstrates that DSNN's explicit treatment of directionality leads to superior generalization, particularly in heterophilic graph settings.

{\bf Limitations and future works}.
%
Further work may include considering node-varying stalk sizes as well as an extended assessment of DSNN on datasets and tasks arising from various real-world applications, e.g., life science and healthcare. Further work may also address low-level optimization aspects of our pipeline to potentially reduce its computational footprint, allowing for its adoption in low-power and low-compute settings.






\bibliographystyle{unsrt}
\bibliography{BIB}

\newpage
\appendix

\section{Licensing}\label{appx:implementation}


%
%
The {\tt Cora}, {\tt Citeseer}, and {\tt PubMed} datasets are available at~\url{https://linqs.org/datasets/}
{\tt Citeseer}, {\tt PubMed}, and {\tt Cora} {are sourced from~\citep{yan2022two}}.
The {\tt Squirrel} and {\tt Chameleon} datasets come from~\cite{platonovcritical}.

Regarding the models used in this paper, we rely on publicly available implementations from the following sources:
\begin{itemize}
    \item \textbf{MLP, GCN, GAT, GGCN, GCNII, Geom-GCN, GPRGNN:} \url{https://github.com/Yujun-Yan/Heterophily_and_oversmoothing} with MIT license.
    \item \textbf{GraphSAGE:} \url{https://pytorch-geometric.readthedocs.io/en/latest/generated/torch_geometric.nn.conv.SAGEConv.html} with MIT license.
    \item \textbf{H2GCN:} \url{https://github.com/Godofnothing/HeterophilySpecificModels/tree/main/H2GCN}.
    \item \textbf{FAGCN:} \url{https://github.com/Godofnothing/HeterophilySpecificModels/tree/main/FAGCN}.
    \item \textbf{MixHop:} \url{https://github.com/benedekrozemberczki/MixHop-and-N-GCN} with GNU General Public License v3.0 (GPL-3.0) license.
    \item \textbf{MagNet:} \url{https://github.com/matthew-hirn/magnet} with Apache License 2.0.
    \item \textbf{SigMaNet:} \url{https://github.com/Stefa1994/SigMaNet} with Apache License 2.0.
    \item \textbf{DirGNN:} \url{https://github.com/emalgorithm/directed-graph-neural-network} with Apache License 2.0.
    \item \textbf{NSD:} \url{https://github.com/twitter-research/neural-sheaf-diffusion} with Apache License 2.0.
\end{itemize}

\section{Derivation of the equation of $\Lc$}

Since, by construction, $\Lc = \deltac^* \deltac$, the following equation holds:
\begin{eqnarray*}
    \Lc(x)_u =  \underbrace{\sum_{e \in \Gamma(u)} \Fc_{u \trianglelefteq e}^* \Fc_{u \trianglelefteq e} x_u}_{\text{self-loop}}
    -  \underbrace{\sum_{e=(v,u) \in E} (\F_{u \trianglelefteq e}^0 T_{\red{vu}}^{(q)})^* \F_{v \trianglelefteq e} x_v}_{\text{inflow}}\\
    - \underbrace{\sum_{e=(u,v) \in E} \F_{u \trianglelefteq e}^T  \F_{v \trianglelefteq e}^0 T_{\red{uv}}^{(q)}x_v}_{\text{outflow}}
    - \underbrace{\sum_{e=\{u,v\} \in E} \F_{u \trianglelefteq e}^T \F_{v \trianglelefteq e}x_v}_{\text{undirected}}. 
\end{eqnarray*}

For a directed graph $G$, the $uu$ component of $\Lc$ can be rewritten as follows:
\begin{align*}
\Lc_{uu} = \sum_{e \in \Gamma(u)} \Fc_{u \trianglelefteq e}^* \Fc_{u \trianglelefteq e}
= & \sum_{e=(u,v) \in E} \Fc_{u \unlhd e}^T \Fc_{u \unlhd e}\\
+ & \sum_{e=(v,u) \in E} (\Fc_{u \unlhd e}^0)^T(T_{vu}^{(q)})^* \Fc_{u \unlhd e}^0 T_{vu}^{(q)}
+  \sum_{e=\{u,v\} \in E} \Fc_{u \unlhd e}^T \Fc_{u \unlhd e}.
\end{align*}
%

Equation~\eqref{eq:Lc} is obtained by combining this equation with the previous one and factoring each summation by $(\Fc_{u \trianglelefteq e})^T$.

\section{Proofs of our theorems}\label{appx:proof}

\setcounter{theorem}{0}

\begin{theorem}
  $\Lc$ is Hermitian and $\Lc \succeq 0$, and the same holds for $\Lc_N$.
\end{theorem}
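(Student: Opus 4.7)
The plan is to exploit the factorization $\Lc = \deltac^* \deltac$ given in the paper and reduce everything to two well-known facts about operators of the form $A^*A$: they are Hermitian, and they are positive semidefinite.

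For Hermiticity of $\Lc$, I would compute $\Lc^* = (\deltac^* \deltac)^* = \deltac^* (\deltac^*)^* = \deltac^* \deltac = \Lc$, invoking the involutive property of the conjugate transpose and the identity $(AB)^* = B^* A^*$. For positive semidefiniteness, I would pick an arbitrary 0-cochain $x \in C^0(G; \Fc)$ and write
\[
\langle x, \Lc x \rangle = \langle x, \deltac^* \deltac x \rangle = \langle \deltac x, \deltac x \rangle = \|\deltac x\|^2 \geq 0,
\]
where $\langle \cdot, \cdot \rangle$ is the standard Hermitian inner product on the complex cochain space. This immediately yields $\Lc \succeq 0$.

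For the normalized operator $\Lc_N = \Dc^{-1/2} \Lc \Dc^{-1/2}$, I would first show that $\Dc$ is Hermitian positive semidefinite. This is straightforward: $\Dc$ is block-diagonal with $u$-th block $\Dc_u = \sum_{e \in \Gamma(u)} \Fc_{u \trianglelefteq e}^* \Fc_{u \trianglelefteq e}$, a sum of matrices of the form $M^*M$, each of which is Hermitian PSD, so each $\Dc_u$ is Hermitian PSD. Assuming (as is standard in the sheaf Laplacian literature and needed for $\Lc_N$ to be well-defined) that each $\Dc_u$ is strictly positive definite, its unique Hermitian positive-definite square root $\Dc_u^{1/2}$ and its inverse $\Dc_u^{-1/2}$ exist and are Hermitian. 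Hermiticity of $\Lc_N$ then follows from
\[
\Lc_N^* = \bigl(\Dc^{-1/2}\bigr)^* \Lc^* \bigl(\Dc^{-1/2}\bigr)^* = \Dc^{-1/2} \Lc \Dc^{-1/2} = \Lc_N,
\]
and positive semidefiniteness from the substitution $y := \Dc^{-1/2} x$, giving
\[
\langle x, \Lc_N x \rangle = \langle \Dc^{-1/2} x, \Lc \Dc^{-1/2} x \rangle = \langle y, \Lc y \rangle \geq 0.
\]

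I do not anticipate any genuine obstacle: the argument is essentially bookkeeping once the factorization $\Lc = \deltac^*\deltac$ is in place. The only subtlety worth mentioning is the invertibility of $\Dc$; if some vertex $u$ has the property that $\sum_{e \in \Gamma(u)} \Fc_{u \trianglelefteq e}^* \Fc_{u \trianglelefteq e}$ is singular, one would replace $\Dc^{-1/2}$ by the Moore–Penrose pseudoinverse $\Dc^{+/2}$, and the same argument goes through verbatim since pseudoinverses of Hermitian PSD matrices are themselves Hermitian PSD. Everything else is mechanical.
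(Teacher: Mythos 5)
Your proof is correct and follows essentially the same route as the paper's: both exploit the factorization $\Lc = \deltac^* \deltac$, obtaining Hermiticity from the $A^*A$ structure and positive semidefiniteness from the quadratic form $\|\deltac x\|^2 \geq 0$, with the normalized case handled by conjugating with $\Dc^{-1/2}$ (the paper phrases this as applying the same argument to $\deltac' := \deltac \Dc^{-1/2}$). If anything, your treatment of $\Lc_N$ is slightly more careful than the paper's, since you explicitly verify that $\Dc$ is Hermitian positive semidefinite and address the invertibility of $\Dc$ (via the pseudoinverse), points the paper leaves implicit.
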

\begin{proof}
By definition, we have $\Lc := \deltac^* \deltac$. Therefore, for any pair of indices $u,v \in V$, $\Lc_{uv} = \deltac_{\bullet u}^* \deltac_{\bullet v}$  and $\Lc_{vu} = \deltac_{\bullet v}^* \deltac_{\bullet u}$.  Since this implies $\Lc_{uv} =  (\Lc_{vu})^*$ for all of its entries $u,v$, we conclude that the matrix is Hermitian. It follows that the spectrum of $\Lc$ is real. By following essentially the same arguments using $\deltac' := \deltac D^{-\frac{1}{2}}$, one can show that the spectrum of $\Lc_N$ is real as well.

Since, again by definition, $\Lc = \deltac^* \deltac$, its associated quadratic form $x^* \deltac^* \deltac x$ (with $x \in \mathbb{C}$) can be rewritten as $x^* \deltac^* \deltac x = (\deltac x)^* (\deltac x) = || \deltac x||_2^2$. Since $||\deltac x||_2^2$ is a norm, $||\deltac x||_2^2 \geq 0$ holds for all $x \in \mathbb{C}$, thus implying $\Lc \succeq 0$ for all $x^* \in \mathbb{C}$. Thus, $\Lc \succeq 0$.  By following the same arguments using $\deltac' := \deltac D^{-\frac{1}{2}}$, one can show that $\Lc_N \succeq 0$ as well.
\end{proof}

\begin{theorem}
  $\Deltac \preceq 2 I$.
\end{theorem}
\begin{proof}
  Let $Q^{\Fc}_N := D^{-\frac{1}{2}}\deltac^* \deltac D^{-\frac{1}{2}}$ for the case where $\deltac$ has {\em not} been given an arbitrary orientation (this is in line with the classical construction of the Signless Laplacian $Q$ for undirected unweighted graphs).
  Since $Q^{\Fc}_N$ is  the product of a matrix and its conjugate, we have $Q^{\Fc}_N \succeq 0$.
  It is easy to show that $Q^{\Fc}_N = 2I - \hat L^\F_N$. From this, we deduce:
    $$
    Q^{\Fc}_N = 2I - \Lc_N \succeq 0 \Leftrightarrow - \Lc_N \succeq -2 I \Leftrightarrow \Lc_N \preceq 2I.
    $$
    This shows that not only $\Lc_N$ has a nonnegative spectrum, but also that its spectrum is upper-bounded by $2$.
\end{proof}

\begin{theorem}
  If $G$ is undirected, $\Lc$ coincides with the classical sheaf Laplacian $L^\F$ for any choice of $q \in \R$. Also, if the   sheaf is trivial and $G$ is undirected and unweighted, $\Lc$ coincides with the classical graph Laplacian $L$. If $G$ is directed and we set $q = 0$, $\Lc$ coincides with the classical sheaf Laplacian associated with the undirected version of $G$.
\end{theorem}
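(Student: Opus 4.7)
The plan is to observe that the entire directional machinery of the Directed Cellular Sheaf is channelled through the entrywise matrix $T^{(q)}$, whose entries are $T^{(q)}_{uv} = \exp(\ii\, 2\pi q (A_{uv} - A_{vu}))$. In each of the three cases stated, these entries collapse to $1$ on the relevant edges, so the complex construction reduces to a real one that is already known to coincide with the classical objects. I will handle the three claims in turn, reusing the first reduction in the second and third.

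For the first claim, if $G$ is undirected then $A = A^\top$, so $A_{uv} - A_{vu} = 0$ for every edge $\{u,v\} \in E$, which gives $T^{(q)}_{uv} = e^0 = 1$ independently of $q$. Plugging this into the definition, the restriction map $\Fc_{v \trianglelefteq e} = \Fc_{v \trianglelefteq e}^0 T^{(q)}_{uv}$ reduces to the real matrix $\Fc_{v \trianglelefteq e}^0$. Hence, on every edge the Directed Coboundary Operator $\deltac(x)_e = \Fc_{u \unlhd e} x_u - \Fc_{v \unlhd e}^0 x_v$ coincides with the classical coboundary $\delta(x)_e$ (after identifying $\Fc_{v \unlhd e}^0$ with the classical restriction map $\F_{v \unlhd e}$). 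Since all matrices involved are real, the conjugate transpose coincides with the ordinary transpose, so $\Lc = \deltac^* \deltac = \delta^\top \delta = L^\F$. For the second claim, I would combine this reduction with the trivial-sheaf hypothesis: with $d=1$ and every restriction map equal to the identity of $\R$, the background of the excerpt already records that $L^\F$ degenerates to $D - A = L$, which finishes the case.

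For the third claim, if $q=0$ then $T^{(0)}_{uv} = \exp(0) = 1$ for every ordered pair, irrespective of whether an edge is directed or undirected. Thus, exactly as in the first case, $\Fc_{v \trianglelefteq e}$ collapses to $\Fc_{v \trianglelefteq e}^0$ and the two branches of the piecewise definition of $\deltac$ become identical, namely $\Fc_{u \unlhd e} x_u - \Fc_{v \unlhd e}^0 x_v$. This is precisely the classical coboundary computed on the undirected graph obtained from $G$ by forgetting edge orientations (with restriction maps given by the real parts $\Fc^0$), so $\Lc = \deltac^* \deltac$ equals the classical sheaf Laplacian of that undirected graph.

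I do not foresee any serious obstacle; the only delicate point is bookkeeping in the third claim, where one must check that the two branches of the piecewise definition of $\deltac$ (for $e \in E^0$ and $e \in E^1$) yield the same expression once $T^{(0)}_{uv} = 1$, so that the resulting operator is genuinely the classical sheaf Laplacian on the underlying undirected graph rather than on some mixed combinatorial object. Verifying this amounts to matching the two summands of $\deltac(x)_e$ entrywise, which is immediate.
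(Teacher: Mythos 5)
Your proposal is correct and follows essentially the same route as the paper: in each case you observe that $T^{(q)}_{uv}=1$ on the relevant edges, so the complex restriction maps collapse to their real parts and $\Lc=\deltac^*\deltac$ reduces to the classical (sheaf or graph) Laplacian. The only cosmetic difference is that for the trivial-sheaf case you invoke the known identity $L^\F = D-A$ from the background, whereas the paper recomputes the entries $\Lc_{uv}=-1$ and $\Lc_{uu}=|\Gamma(u)|$ directly; both are fine.
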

\begin{proof}
{\bf Part 1.} If $G$ is undirected, all restriction maps of the Directed Cellular Sheaf are real for every choice of $q \in \R$--this is because, for all $u,v \in V$, $A = A^T$ implies $\Re(T_{uv}^{(q)}) = 1$ and $\Im(T_{uv}^{(q)}) = 0$ for any choice of $q$. This implies $\Im(\Fc_{u \trianglelefteq e}) = 0$ for all $e \in E$ where $u$ is one of its endpoints; therefore, $\Lc$ is real valued and $\Lc = L^\F$.\\
{\bf Part 2.}
Under the same assumptions on $G$, if the Directed Cellular Sheaf is trivial, $d = 1$ and $\Fc_{u \trianglelefteq e} = 1$ for all edges $e \in E$ with $u$ being one if its endpoints. Thus, $\Lc_{uv} = -1$ if $\{u,v\} \in E$ and 0 otherwise, while $\Lc_{uu} = |\{e \in E: e = \{u,v\}\}|$; by definition, it follows that $\Lc$ coincides with the classical Laplacian matrix $L = D-A$ with $A \in \{0,1\}^{n \times n}$.\\
{\bf Part 3.} Setting $q = 0$ leads to, for all $u,v \in V$, $T_{uv}^{(q)} = \cos(0) + \ii \sin(0)= 1$. Thus,
$\Lc$ coincides with the Directed Sheaf Laplacian $L^\F$ associated with the undirected version of $G$ which is obtained from it by preserving each of its edges and making all of them undirected---this coincides with discarding $\Im(\Fc_{u \trianglelefteq e}) = 0$ for all $e \in E$ where $u$ is one of its endpoints.
\end{proof}

\begin{theorem}
  Letting $G$ be a directed graph with unit weights, the Directed Sheaf Laplacian $\Lc$ associated with a Trivial Directed Cellular Sheaf coincides with the Magnetic Laplacian $L^{(q)}$. In the special case where $q=\frac{1}{4}$, $\Lc$ also coincides with the Sign-Magnetic Laplacian $L^\sigma$.
\end{theorem}
\begin{proof}
{\bf Part 1.} First, we show that, when adopting a Trivial Directed Cellular Sheaf for a directed graph $G$ with unit weights, we have:
\begin{align*}
\Lc_{uv} = -T_{uv}^{(q)} & \qquad u,v \in V: u \neq v\\
\Lc_{uu} = |\Gamma(u)| & \qquad u \in V.
\end{align*}

Eq.~\ref{eq:Lc:uv} and~\ref{eq:Lc:uu} read:
\begin{align*}
\Lc_{uv} =& \left\{\begin{array}{ll}
                    -\Fc_{u \unlhd e}^* \Fc_{v \unlhd e} = -\Fc_{u \unlhd e}^T \Fc_{v \unlhd e}^0 T_{\red{uv}}^{(q)} & \text{ if } e=(u,v)\\
                    -\Fc_{u \unlhd e}^* \Fc_{v \unlhd e} = -(\Fc_{u \unlhd e}^0 T_{\red{vu}}^{(q)})^* \Fc_{v \unlhd e}  & \text{ if } e=(v,u)\\
                    -\Fc_{u \unlhd e}^* \Fc_{v \unlhd e} = -\Fc_{u \unlhd e}^T \Fc_{v \unlhd e} & \text{ if } e=\{u,v\}\\
                    0 & \text{ otherwise}
\end{array}
  \right.\\
  \Lc_{uu} =& \sum_{e \in \Gamma(u)} \Fc_{u \trianglelefteq e}^* \Fc_{u \trianglelefteq e}
  .
\end{align*}
When considering a Trivial Directed Cellular Sheaf, we have
\begin{itemize}
    \item $\Fc_{u \trianglelefteq e} = \Fc_{v \trianglelefteq e} = 1$ if $e=\{u,v\} \in E$ and, thus, $\Lc_{uv}= -1 = -T_{uv}^{(q)}$ (th latter is because $A_{uv} = A_{vu}$ implies $T_{uv}^{(q)} = \cos(0) + \ii \sin(0) = 1$).
    \item $\Fc_{u \trianglelefteq e} = 1$ and $ \Fc_{v \trianglelefteq e} = T_{\red{uv}}^{(q)}$ if $e=(u,v) \in E$ and, thus, $\Lc_{uv}= -T_{uv}^{(q)}$;
    \item $\Fc_{u \trianglelefteq e} = T_{\red{vu}}^{(q)}$ and $\Fc_{v \trianglelefteq e} = 1$ if $e=(v,u) \in E$ and, thus, $\Lc_{uv}= -(T_{vu}^{(q)})^* = -T_{uv}^{(q)}$.
\end{itemize}

Each diagonal term $\Lc_{uu}$ of $\Lc$ reads
\begin{align*}
\Lc_{uu} = \sum_{e \in \Gamma(u)} \Fc_{u \trianglelefteq e}^* \Fc_{u \trianglelefteq e}
= & \sum_{e=(u,v) \in E} \underbrace{\Fc_{u \unlhd e}^T \Fc_{u \unlhd e}}_{=1}\\
+ & \sum_{e=(v,u) \in E} \underbrace{(\Fc_{u \unlhd e}^0)^T(T_{vu}^{(q)})^* \Fc_{u \unlhd e}^0 T_{vu}^{(q)}}_{= (T_{vu}^{(q)})^* (T_{vu}^{(q)}) = 1}
+  \sum_{e=\{u,v\} \in E} \underbrace{\Fc_{u \unlhd e}^T \Fc_{u \unlhd e}}_{=1} \\
& =|\Gamma(u)|,
\end{align*}
where $(T_{vu}^{(q)})^* (T_{vu}^{(q)}) = 1$ holds since $T_{vu}^{(q)} = \ii$.
%
With this, Part 1 is shown.

{\bf Part 2.}

The Magnetic Laplacian reads
\begin{equation*}
 L^{(q)} := D_s - H^{(q)}, \text{ with } H^{(q)} := A_s \odot \exp \left(\ii \, 2 \pi q\left(A-A^\top \right) \right),
\end{equation*}
with $A_s := \frac{A + A^T}{2}$ and $D_s = \diag(\mathbf{1}_n A_s)$.

By definition we gave of $T_{uv}^{(q)}$, for a component $u,v$ with $u,v \in V$, we have:
\begin{equation*}
 L^{(q)}_{uv} := D_{s_{uv}} - H^{(q)}_{uv} = D_{s_{uv}} - A_{s_{uv}}  T_{uv}^{(q)}.
\end{equation*}

{\bf Part 2a.} Let's assume $G$ undirected.
In such a case, we have we have $A_{s_{uv}} = 1$ whenever $\{u,v\} \in E$ and $A_{s_{uv}} = 0$ otherwise. This implies $D_{s_{uu}} = |\Gamma(u)|$.
Thus, we have:
\begin{align*}
 L^{(q)}_{uv} = & - A_{s_{uv}}  T_{uv}^{(q)} = -T_{uv}^{(q)} = \Lc_{uv} & \qquad u,v \in V: u \neq v\\
 L^{(q)}_{uu} = & D_{s_{uu}} - A_{s_{uu}}  T_{uu}^{(q)} = D_{s_{uu}} = |\Gamma(u)| = \Lc_{uu} & \qquad u \in V,
\end{align*}
where the last equation holds since $T_{uu}^{(q)} = 0$ for any $q$. Thus, $\Lc = L^{(q)}$.

{\bf Part 2b.} Let's assume $G$ directed without digons.
In such a case, we have $A_{s_{uv}} = \frac{1}{2}$ whenever either $(u,v) \in E$ or $(v,u) \in E$ and $A_{s_{uv}} = 0$ otherwise. This implies $D_{s_{uu}} = \frac{1}{2}|\Gamma(u)|$.
Thus, we have:
\begin{align*}
 L^{(q)}_{uv} = & - A_{s_{uv}}  T_{uv}^{(q)} = -\frac{1}{2}T_{uv}^{(q)} = \frac{1}{2}\Lc_{uv} & \qquad u,v \in V: u\neq V\\
 L^{(q)}_{uu} = & D_{s_{uu}} - A_{s_{uu}}  T_{uu}^{(q)} = D_{s_{uu}} = \frac{1}{2}|\Gamma(u)| = \frac{1}{2}\Lc_{uu} & \qquad u \in V,
\end{align*}
where the last equation holds since $T_{uu}^{(q)} = 0$ for any $q$. Thus, $\Lc = 2L^{(q)}$. Notice that the scaling factor is immaterial when the Laplacian matrix is embedded in a GCN/SNN, as it is directly subsumed by either $W_1$ or $W_2$ in Eq~\eqref{eq:sheafneural} (only by the latter in a GCN, where $W_1$ is not present).

{\bf Part 3.} Since, as shown in~\cite{fiorini2023sigmanet}, $L^{(q)}$ and $L^{\sigma}$ coincide with $q = \frac{1}{4}$, the last part of the claim follows directly from Parts 2a and 2b.
\end{proof}

\begin{theorem}
    Let $G$ be a directed graph with unit weights. \red{Assuming a Trivial Directed Cellular Sheaf}, the conjugate transpose $\deltac^*$ of the Directed Coboundary Operator $\deltac$ boils down to the complex-valued node-to-edge incidence matrix $\widehat B \in \mathbb{C}^{\red{n \times m}}$ defined for an edge $e \in E$ incident to a vertex $u$:
  \begin{equation*}
    \widehat B_{\red{ue}} = \left\{\begin{array}{ll}
                    1 & \text{ if } e=(u,v) \text{ or } e=\{u,v\} \text{ with } \red{u < v}\\
                    \red{-1} & \text{ if } e=\{u,v\} \text{ with } \red{u > v}\\
                    -T_{uv}^{(q)} & \text{ if } e=(v,u).
                  \end{array}
  \right.
  \end{equation*}
  It follows that $L^{(q)} = \widehat B \widehat B^*$.
  With $q=\frac{1}{4}$, $L^{(\frac{1}{4})} = L^{\sigma} = \widehat B \widehat B^*$.
\end{theorem}
\begin{proof}
(First, notice the arbitrary orientation that was given to the undirected edges).

From the proof of the previous theorem, we know that, if $G$ has unit weights and the Directed Cellular Sheaf is trivial, we have:
\begin{align*}
\Lc_{uv} = -T_{uv}^{(q)} & \qquad u,v \in V: u \neq v\\
\Lc_{uu} = |\Gamma(u)| & \qquad u \in V.
\end{align*}

Let's consider $(\widehat B \widehat B^*)_{uv} = \sum_{e' \in E} \widehat B_{u e'} (\widehat B_{v e'})^*$. Since we are considering a graph, $u,v$ can only share a single edge. Calling it $e$, we have $(\widehat B \widehat B^*)_{uv} = \widehat B_{u e} (\widehat B_{v e})^*$ if $e \in E$ or $0$ if they share no edge at all. Let's assume they do, and considering three cases:
\begin{itemize}
 \item If $e =\{u,v\}$ with $u < v$, $\widehat B_{u e} = 1$ and $(\widehat B_{v e})^* = -1$ with an arbitrary orientation and, thus, $\widehat B_{u e} (\widehat B_{v e})^* = - 1 = -T_{uv}^{(q)}$ (this is correct since, as shown before, $T_{uv}^{(q)}$ is always equal to 1 if $A_{uv} = A_{vu}$).
 \item If $e =\{u,v\}$ with $u > v$, $\widehat B_{u e} = -1$ and $(\widehat B_{v e})^* = 1$ with an arbitrary orientation and, thus, $\widehat B_{u e} (\widehat B_{v e})^* = - 1 = -T_{uv}^{(q)}$ (as shown before, the latter is always equal to 1 if $A_{uv} = A_{vu}$).
\item If $e =(u,v)$, $\widehat B_{u e} = 1$ and $(\widehat B_{v e})^* = (-T_{vu}^{(q)})^*$ and, thus, $\widehat B_{u e} (\widehat B_{v e})^* = (-T_{vu}^{(q)})^* = -T_{uv}^{(q)}$ since $T^{(q)}$ is Hermitian by construction.
\item If $e =(v,u)$, $\widehat B_{u e} = -T_{uv}^{(q)}$ and $(\widehat B_{v e})^* = 1$ and, thus, $\widehat B_{u e} (\widehat B_{v e})^* = -T_{uv}^{(q)}$.
\end{itemize}
This shows that, if $G$ has unit weights and assuming a Trivial Directed Cellular Sheaf, we have $\Lc = \widehat B \widehat B^*$.
The fact that (with a scaling factor of 2, when needed) $L^{(q)} = \widehat B \widehat B^*$ and $L^{\sigma} = \widehat B \widehat B^*$ when $q=\frac{1}{4}$ follow from the previous theorem.
\end{proof}

\section{Complexity of \name{}}\label{appx:complexity}

As mentioned in the paper, the complexity of \name{} coincides, asymptotically, with that of NSD. This is because the adoption of complex-valued restriction maps---which are specific to \name{} and not present in NSD---does not affect the asymptotic inference complexity of \name{}. This is because complex-valued synaptic weights, pre-activations, and activations only incur a constant multiplicative overhead (approximately a factor of 4) in the forward pass and, thus, do not alter the asymptotic complexity from the real-valued case analysis.
To better see this, consider three complex-valued matrices:
\[
A = A_R + \ii A_I, \quad X = X_R + \ii X_I, \quad Y = Y_R + \ii Y_I,
\]
with
\[
A_R, A_I \in \mathbb{R}^{m \times n}, \quad X_R, X_I \in \mathbb{R}^{n \times p}, \quad Y_R, Y_I \in \mathbb{R}^{m \times p},
\]
satisfying the complex linear equation \( Y = AX \).
This equation can be rewritten purely in the real domain using the \emph{lifting} transformation:
\[
X_\mathbb{R} =
\begin{bmatrix}
X_R \\
X_I
\end{bmatrix}
\in \mathbb{R}^{2n \times p}, \quad
Y_\mathbb{R} =
\begin{bmatrix}
Y_R \\
Y_I
\end{bmatrix}
\in \mathbb{R}^{2m \times p}, \quad
A_\mathbb{R} =
\begin{bmatrix}
A_R & -A_I \\
A_I & A_R
\end{bmatrix}
\in \mathbb{R}^{2m \times 2n},
\]
so that$
Y_\mathbb{R} = A_\mathbb{R} X_\mathbb{R}$ holds.
Hence, complex-valued operations can be reduced to real-valued operations with a constant factor overhead, which is immaterial in the asymptotic complexity.

\section{Further Details on the Datasets}\label{appx:dataset}

\paragraph{Real-world dataset.}

The {\tt Texas}, {\tt Wisconsin}, and {\tt Cornell} datasets are part of the WebKB collection, modeling links between websites from different universities. In these datasets, nodes are labeled as student, project, course, staff, or faculty.

The {\tt Film} dataset is derived from a film–director–actor–writer network. Each node represents an actor, and edges indicate co-occurrence on the same Wikipedia page. Node features correspond to keywords extracted from these Wikipedia pages. The nodes are classified into five categories based on the content of the actors’ Wikipedia entries.

The {\tt Citeseer} dataset contains 3,312 scientific publications classified into six categories. The citation network includes 4,732 links. Each publication is represented by a binary word vector indicating the presence or absence of words from a dictionary of 3,703 unique terms.

The {\tt PubMed} dataset consists of 19,717 scientific publications related to diabetes, categorized into three classes. The citation network contains 44,338 links. Each publication is described by a TF-IDF weighted word vector derived from a dictionary of 500 unique words.

The {\tt Cora} dataset includes 2,708 scientific publications classified into seven classes, with a citation network comprising 5,429 links. Each publication is represented by a binary word vector indicating the presence or absence of words from a dictionary of 1,433 unique terms.

The {\tt Squirrel} and {\tt Chameleon} datasets consist of articles from the English Wikipedia (December 2018). Nodes represent articles, and edges represent mutual links between them. Node features indicate the presence of specific nouns in the articles. Nodes are grouped into five categories based on the original regression targets.

\paragraph{Synthetic dataset.} 
Following~\cite{zhang2021magnet}, we generate synthetic graphs using the directed stochastic block model (DSBM) as follows. Let $n$ be the number of nodes and $C$ the number of equal‐sized communities $\{C_1,\dots,C_C\}$. First, we sample an undirected graph by connecting each pair of nodes $u\in C_i$ and $v\in C_j$ independently with probability
$
\alpha_{ij} \in [0,1],
\quad
\alpha_{ij} = \alpha_{ji},
$
where $\alpha_{ii}$ controls intra‐community edge density and $\alpha_{ij}$ for $i \neq j$ controls inter‐community connectivity.
To obtain a directed graph, we introduce a rule to transform the graph from undirected to directed: we define a collection of probabilities $\{\beta_{ij}\}_{1\le i,j\le C}$, where $
\beta_{ij} \in [0,1]$, such that 
$\beta_{ij} + \beta_{ji} = 1.
$
If $u\in C_i$ and $v\in C_j$, we orient the edge $u \to v$ with probability $\beta_{ij}$, and $v \to u$ with probability $\beta_{ji}$.

\section{Further Details on the Experiments}\label{appx:experiment}


\paragraph{Hardware.} The experiments were conducted on 2 different machines: 
\begin{enumerate}
    \item An Intel(R) Xeon(R) Gold 6326 CPU @ 2.90GHz with 380 GB RAM, equipped with an NVIDIA Ampere A100 40GB.
    \item A 12th Gen Intel(R) Core(TM) i9-12900KF CPU @ 3.20GHz CPU with 64 GB RAM, equipped with an NVIDIA RTX 4090 GPU.
\end{enumerate}

\paragraph{Model Settings.} We trained every learning model considered in this paper for up to 1000 epochs with early stops of 200. We adopted a learning rate of $\{1 \cdot 10^{-2}, 2 \cdot 10^{-2}, 5 \cdot 10^{-3}\} $ and employed the optimization algorithm Adam. 

We adopted a hyperparameter optimization procedure to identify the best set of parameters for each model. 
For every model, we searched for the optimal combination of the following hyperparameters:
\begin{itemize}
    \item \textbf{Dropout:} \{0.0, 0.1, 0.2, 0.3, 0.4, 0.5, 0.6, 0.7, 0.8, 0.9\}
    \item \textbf{Number of layers:} \{2, 3, 4, 5, 6\}
    \item \textbf{Hidden channels:} \{8, 16, 32, 64, 128\}.
\end{itemize}

For some specific models, we also included additional hyperparameters in the search space:
\begin{itemize}
    \item \textbf{NSD-comp and NSD:} \texttt{sheaf\_act} $\in$ \{\texttt{elu}, \texttt{tanh}, \texttt{relu}\}; $d \in \{2, 3, 4, 5\}$; \texttt{add\_lp} $\in$ \{\texttt{True}, \texttt{False}\}; \texttt{add\_hp} $\in$ \{\texttt{True}, \texttt{False}\}
    \item \textbf{DirGNN:} $\alpha_{\text{DirGNN}} \in$ \{0.0, 0.1, 0.2, 0.3, 0.4, 0.5, 0.6, 0.7, 0.8, 0.9, 1.0\}; \texttt{jk} $\in$ \{\texttt{cat}, \texttt{max}\}
    \item \textbf{MagNet:} $q \in$ \{0.0, 0.05, 0.1, 0.15, 0.2, 0.25\}
    \item \textbf{GCNII:} $\alpha_{\text{GCNII}} \in $ \{0.0, 0.1, 0.2\}; $\lambda \in$ \{0.0, 1.0, 1.5\}
    \item \textbf{FAGCN:} $\varepsilon \in$ \{0.0, 0.1, 0.2, 0.3, 0.4, 0.5, 0.6, 0.7, 0.8, 0.9, 1.0\}
    \item \textbf{GGCN:} \texttt{decay\_rate} $\in$ \{0.0, 0.1, 0.2, 0.3, 0.4, 0.5, 0.6, 0.7, 0.8, 0.9, 1.0, 1.1, 1.2\}
    \item \textbf{GPRGNN:} $\alpha_\text{GPRGNN}$ $\in$ \{0.0, 0.1, 0.2, 0.3, 0.4, 0.5, 0.6, 0.7, 0.8, 0.9, 1.0\}; \texttt{dprate\_GPRGNN} $\in$ \{0.0, 0.1, 0.2, 0.3, 0.4, 0.5, 0.6, 0.7, 0.8, 0.9\}
    \item \textbf{\name:} $q \in [ 0, 0.1, 0.15, 0.20, 0.25,0.5,0.75, 1 ]$, \texttt{sheaf\_act} $\in$ \{\texttt{elu}, \texttt{tanh}, \texttt{relu}\}, $d \in \{2, 3, 4, 5\}$; \texttt{add\_lp} $\in$ \{\texttt{True}, \texttt{False}\}; \texttt{add\_hp} $\in$ \{\texttt{True}, \texttt{False}\}
    
\end{itemize}

\end{document}